\def\eqref#1{equation~\ref{#1}}
\def\1{\bm{1}}
\def\vzero{{\bm{0}}}
\def\vone{{\bm{1}}}
\def\va{{\bm{a}}}
\def\vb{{\bm{b}}}
\def\vc{{\bm{c}}}
\def\vp{{\bm{p}}}
\def\vq{{\bm{q}}}
\def\vr{{\bm{r}}}
\def\vx{{\bm{x}}}
\DeclareMathAlphabet{\mathsfit}{\encodingdefault}{\sfdefault}{m}{sl}
\SetMathAlphabet{\mathsfit}{bold}{\encodingdefault}{\sfdefault}{bx}{n}
\providecommand{\Description}[1]{}
\newcommand{\lowc}[1]{\mbox{low}({#1})}
\newcommand{\upc}[1]{\mbox{up}({#1})}
\newcommand{\cen}[1]{\mbox{cen}({#1})}
\newcommand{\off}[1]{\mbox{off}({#1})}
\newcommand{\std}[1]{{\scriptsize $\pm$#1}}
\newcommand{\f}[1]{\textbf{#1}}
\newcommand{\s}[1]{\underline{#1}}
\newcommand{\g}[1]{\textcolor{gray}{#1}}
\newcommand{\method}{GeometrE}
\newcommand{\betae}{{\normalsize{B}\footnotesize{ETA}\normalsize{E}}}
\newcommand{\relemb}[1]{\ensuremath{\bm{#1} = (#1_1, #1_2, #1_3, #1_4)}}
\newcommand{\proj}[2]{\hat{#1}_{#2}}
\newcommand{\dist}[1]{\mbox{dist}_{\scriptsize \mbox{#1}}}
\newcommand{\query}[1]{#1_{\mbox{\scriptsize query}}}
\newcommand{\answer}[1]{#1_{\mbox{\scriptsize answer}}}
\renewcommand{\paragraph}[1]{\subsubsection{#1}}
\begin{document}
\title{Fully Geometric Multi-Hop Reasoning on Knowledge Graphs With
  Transitive Relations}
\titlerunning{GeometrE}
%
\author{Fernando Zhapa-Camacho\inst{1,2,3}\orcidID{0000-0002-0710-2259} \and \\
  Robert Hoehndorf\inst{1,2,3}\orcidID{0000-0001-8149-5890}
}

\authorrunning{F. Zhapa-Camacho and R. Hoehndorf}
%

\institute{Computer, Electrical and Mathematical Sciences \&
  Engineering Division, King Abdullah University of Science and
  Technology, 4700 KAUST, 23955, Thuwal, Saudi Arabia \and
  KAUST Center of Excellence for Smart Health (KCSH), King Abdullah
    University of Science and Technology, 4700 King Abdullah
    University of Science and Technology, Thuwal, Saudi Arabia \and
    KAUST Center of Excellence for Generative AI, King Abdullah
    University of Science and Technology, 4700 King Abdullah
    University of Science and Technology, Thuwal, Saudi Arabia \\
    \email{\{fernando.zhapacamacho,robert.hoehndorf\}@kaust.edu.sa}
  }

\maketitle              
\begin{abstract}
  Multi-hop logical reasoning on knowledge graphs requires faithfully
  mapping the logical semantics to latent space. Current geometric
  embedding methods show to be useful on this task by mapping entities
  to geometric regions and logical operations to latent
  transformations. While a geometric embedding can provide a direct
  interpretability framework for query answering, current methods have
  only leveraged the geometric construction of entities, failing to
  map logical operations to pure geometric transformations and,
  instead, using neural components to learn these operations. On the
  other hand, purely neural-based methods outperform geometric
  methods, but they lack interpretability in the latent space. We
  introduce \method{}, a geometric embedding method for multi-hop
  reasoning, that maps every logical operation to a purely geometric
  operation in the latent space. Additionally, we introduce a
  transitive loss function and show that, unlike existing methods, it
  can preserve the logical rule
  $\forall a,b,c: r(a,b) \land r(b,c) \to r(a,c)$. Our experiments
  show that \method{} outperforms current state-of-the-art geometric
  methods and remains competitive with existing neural-based methods
  on standard benchmark datasets.

  \keywords{Query Answering, Knowledge Graph Embedding, Geometric Embeddings, Neuro-Symbolic AI}
\end{abstract}
\section{Introduction}
Different forms of information can be encoded in a graph-structure
manner and large knowledge graphs are found in different application
scenarios~\cite{Hogan_2021}. Knowledge graph completion involves
discovering facts that are not in a knowledge graph but may be added
to it~\cite{Peng_2023}. Multi-hop reasoning is the task of retrieving
answers for complex queries in knowledge graphs. While symbolic
methods can be employed for this task, they are limited when knowledge
graphs contain noise or are incomplete. Therefore, embedding-based
methods were developed to generate vector representations of queries
and their answers, and use them to answer multi-hop
queries.

Complex queries involve first order logical operations such as
intersection ($\land$), union ($\lor$), negation ($\neg$) and
existential quantification ($\exists$). Existing embedding methods map
queries, answers and logical operators into the embedding space, and
rely on geometric and probabilistic spaces to learn the
embeddings. Geometric methods map queries to points~\cite{gqe} or
closed regions~\cite{query2box,cone,acone} and associate potential
answers via distance or membership functions, respectively. In terms
of logical operations, methods such as Query2Box uses a box
model~\cite{query2box}, where intersection is a closed operation. To
further incorporate other operators like negation, more recent methods
leveraged other geometries/spaces such as cones~\cite{cone,acone} or
probability distributions~\cite{betae}.

Knowledge graphs represent relationships between entities. Modeling
relation properties (e.g., symmetry, invertibility, transitivity) is
another line of research when generating embeddings for knowledge
graphs. In tasks such as link prediction (1-hop queries) there is a
wide range of methods aiming to preserve relational properties
\cite{Jin_2023}; however, in the case of multi-hop reasoning, only
recent studies have paid attention to relation
properties~\cite{acone}.

We focus on two limitations of current work:

\paragraph{Geometric interpretability.} Interpretability refers to the
degree to which a model's internal mechanisms and features can be
understood \cite{Garouani_2024}.
Existing methods to model queries rely purely on neural networks to
model both queries and operators~\cite{long-etal-2022-neural}, or
provide a geometric construction of queries but use neural networks to
learn the logical operations~\cite{query2box,betae,cone}. In the
second case, the interpretability degree is higher, since we can
better understand the mechanism to represent queries. But, in both
cases interpretability of the logical operators is limited as such
operators are \emph{learned} by a neural network.

\paragraph{Modeling of transitive relations.} Transitive relations can
be understood in two ways: (a) the rule form
$r \circ \ldots \circ r \equiv r$, where $r$ composed with itself is
equivalent to $r$ and (b) the logical form
$\forall a,b,c: r(a,b) \land r(b,c) \to r(a,c)$ where entities $a,b,c$
exist in chain-like form under the relation $r$. Based on the rule
form, we can model the relation projection as an idempotent
transformation~\cite{rotpro} and, from the perspective of the logical
form, the chain-like structure of entities can be modeled with an
order-preserving loss function such as those used to preserve
hierarchies~\cite{ordere}.

While some multi-hop reasoning methods can encode relation
composition~\cite{query2box,acone}, they cannot model transitivity
(which is a special case of composition) because their encoding does
not preserve the idempotency property of transitive
relations. Furthermore, current methods rely on distance-based
functions to associate queries and answers, and those methods cannot
encode the chain-like form of transitive relations either.
\\

To overcome the aforementioned limitations, we present \method{}, a
box model for query answering. In \method{}, we map every logical
operation to a purely geometric transformation, therefore removing the
need of neural networks and making the embedding process fully
geometrically interpretable. Additionally, we construct an idempotent
geometric operation and propose a transitive loss function to preserve
transitivity of relations in the embedding space.  Our contributions
are listed below:
\begin{itemize}
\item We propose a geometrically interpretable embedding method based
  on box embeddings for query answering on knowledge
  graphs.
 \item We introduce a
   multiplicative-additive projection mechanism (combining additive and
    multiplicative components) that significantly enhances the
    expressiveness of relation embeddings compared to purely additive
    translations.
\item  We design a transitive loss function that captures transitivity
  in the embedding space.
\item We show empirical results across several standard benchmarks
  demonstrating that our method can outperform state-of-the-art
  geometric approaches and closing the gap to neural-based methods.
\end{itemize}

\section{Related Work}

\paragraph{Geometric embeddings:} Initial work on geometric embeddings
used points to model queries and answers~\cite{gqe}. Since queries can
contain more than one answer, modeling queries as closed
regions~\cite{query2box,cone,acone} improved the embedding quality.
Boxes~\cite{query2box} and cones~\cite{cone,acone} have been used to
model queries. Fundamentally, box models cannot represent negation,
since the complement of a box is not a box. Similarly, cone models
cannot exactly represent intersection because intersection of cones
might produce a set of disconnected cones~\cite{acone}. We have chosen
a box model to encode queries, and implement negation as a geometric
exclusion constraint, which works in the case `A but not B', which is
the negation form typically found in knowledge graph queries.

\paragraph{Embeddings for transitive relations.} Knowledge graph
embedding methods that encode transitive relations such as
RotPro~\cite{rotpro} models relations as rotations and implement an
idempotent transformation by projection.  Hierarchies, which are
transitive, have been modeled by several approaches, such as using
concentric circles~\cite{hake} or hyperbolic spaces
~\cite{nickel2017poincare}. However, these methods are tailored for
link prediction (1-hop queries) and do not support more complex
queries because they do not provide mechanisms to compute
intersection, negation, or union of entities.

\paragraph{Non-geometric embeddings.} Other types of methods have been
proposed to address the multi-hop reasoning problem. Non-geometric
methods use neural approaches to model logical
operations~\cite{long-etal-2022-neural} or use Large Language Models
(LLMs) to enhance query and entity representations~\cite{Phan_2025}.
We exclude LLM-based methods because they operate on a fundamentally
different paradigm. While LLMs use pre-trained external knowledge
(mostly natural language), our method relies strictly on the graph
topology and logical structure. Therefore, we consider LLM-based
reasoning out of scope, as our objective is to map logical semantics
to a geometric latent space to enable interpretable reasoning.

\section{Preliminaries}

\paragraph{Knowledge Graphs. } A knowledge graph is a tuple
$(\mathcal{V}, \mathcal{E}, \mathcal{R})$, where $\mathcal{V}$ is a
set of nodes, $\mathcal{R}$ is a set of relation labels and
$\mathcal{E} \subseteq \mathcal{V} \times \mathcal{R} \times
\mathcal{V}$ is a set of triples. Every element
$(s_i,r_j,o_k) \in \mathcal{V} \times \mathcal{R} \times \mathcal{V}$
can be seen as a logical binary predicate $r_j(s_i,o_k)$.

\paragraph{Logical queries.} Query embeddings in knowledge graphs are
represented as logical expressions using First-Order logic and involve
intersection ($\land$), union ($\lor$), negation ($\neg$) and
existential quantification ($\exists$).  A first-order logic query $q$
consists of a given anchor entity set
$\mathcal{V}_a \subseteq \mathcal{V}$, existentially quantified bound
variables $V_1,\ldots,V_k$ and a single target variable $V_?$ which
corresponds to the query answer. We use queries in Disjunctive Normal
Form (DNF) of the form:
\[
  q[V_?] = V_?:\exists V_1, \ldots, V_k: c_1 \lor \ldots \lor c_n
\]
Expressions such as $c_i$ are conjunctions
$c_i=e_{i1} \land \ldots \land e_{im}$, where $e_{ij}$ represents a
formula or its negation. $e_{ij}=r(v_a, V) $ or $ \neg (v_a, V)$ or
$r(V', V)$ or $\neg r(V',V)$ and $v_a \in \mathcal{V}_a$,
$V\in \{V_?, V_1, \ldots, V_k\}$, $V'\in \{V_1, \ldots, V_k\}$,
$V\neq V'$, $r \in \mathcal{R}$.

\paragraph{Relational properties}
Logical predicates in $\mathcal{R}$ can have properties such as:
symmetry, invertibility, composition or transitivity.
A relation $r \in \mathcal{R}$ is symmetric if
$\forall a,b \in \mathcal{V}: r(a,b) \to r(b,a)$. Given relations
$r,s \in \mathcal{R}$, $r$ is the inverse of $s$ if
$\forall a,b \in \mathcal{V}, r(a,b) \to s(b,a)$. Given relations
$r,s,t$, we say $t$ is the composition of $r$ and $s$ if
$\forall a,b,c \in \mathcal{V}: r(a,b) \land s(b,c) \to t(a,c)$. A
special case of composition is transitivity, where
$\forall a,b,c \in \mathcal{V}: r(a,b) \land r(b,c) \to r(a,c)$.

\begin{figure}[tbp]
  \Description{Representation of logical operations in GeometrE}
    \centering
    \begin{subfigure}[b]{0.14\textwidth}
      \includegraphics[width=\textwidth]{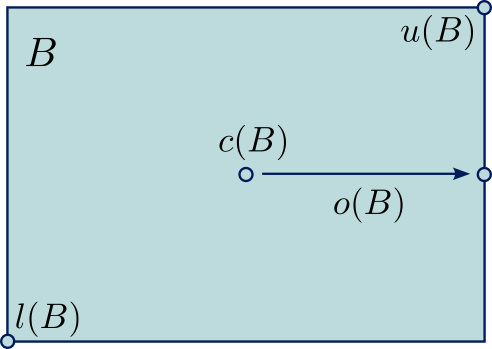}
      \caption{Box}
      \label{fig:box}
    \end{subfigure}
    \hfill
    \begin{subfigure}[b]{0.18\textwidth}
      \includegraphics[width=\textwidth]{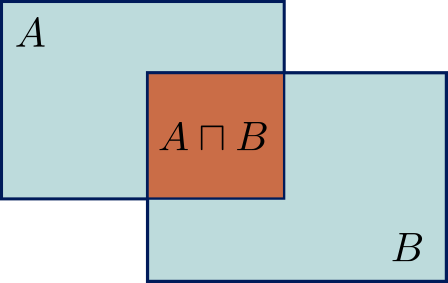}
      \caption{Intersection}
      \label{fig:intersection}
    \end{subfigure}
    \hfill
    \begin{subfigure}[b]{0.18\textwidth}
      \includegraphics[width=\textwidth]{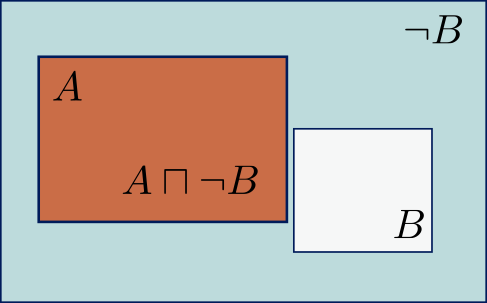}
      \caption{Negation}
      \label{fig:negation}
    \end{subfigure}
    \hfill
    \begin{subfigure}[b]{0.18\textwidth}
      \includegraphics[width=\textwidth]{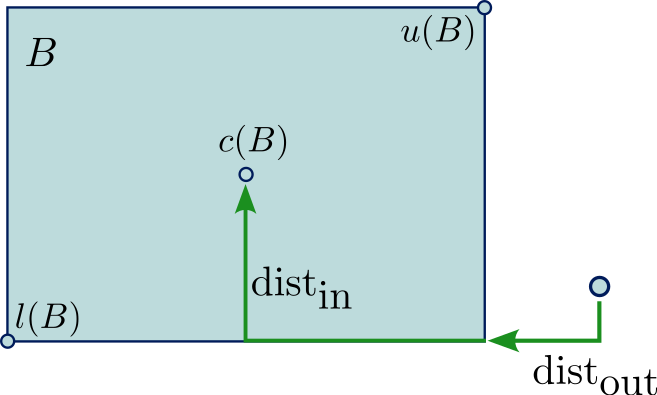}
      \caption{Point inclusion}
      \label{fig:inclusion}
    \end{subfigure}
    \hfill
    \begin{subfigure}[b]{0.19\textwidth}
      \includegraphics[width=\textwidth]{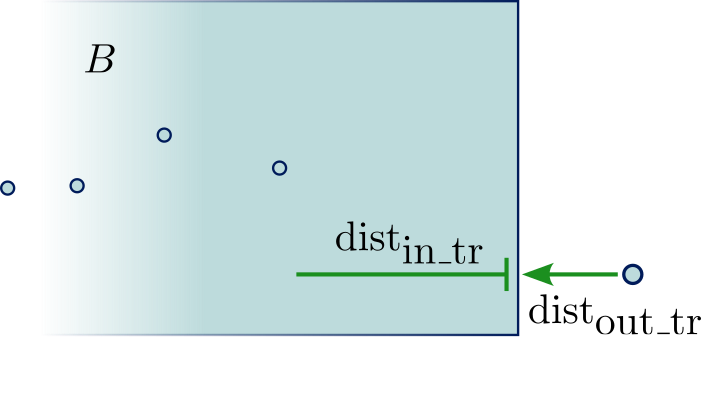}
      \caption{Point ordering}
      \label{fig:ordering}
    \end{subfigure}
    \caption{\method{} representation of boxes and logical operations in $\mathbb{R}^2$.}
    \label{fig:box_transformations}
\end{figure}

\section{Methodology}

We revisit the notion of box embeddings presented in
Query2Box~\cite{query2box} and propose \method{}, a box embedding
model for queries. We acknowledge that box embeddings are not closed
under negation and this limitation was addressed by using other
geometric spaces~\cite{betae,cone}. However, we model negation as a
geometric exclusion constraint. Our proposed constraint works for the
case `A but not B', which is the negation form typically found in
knowledge graph queries.

\paragraph{Box.} An $n$-dimensional axis-aligned box $B$ is a pair
$(\lowc{B}, \upc{B})$, where $\lowc{B}, \upc{B} \in \mathbb{R}^n$ and
$\lowc{B}[i] \leq \upc{B}[i], 1\leq i \leq n$. We call $\lowc{B}, $
and $\upc{B}$ the \emph{lower} and \emph{upper} corners of $B$,
respectively. Alternatively, a box $B$ can be defined as the pair
$(\cen{B}, \off{B})$, denoting the center and offset of $B$:
\begin{equation}
  \label{eq:box}
  \cen{B} = \frac{\upc{B}+\lowc{B}}{2}, \quad   \off{B} = \frac{\upc{B}-\lowc{B}}{2},
\end{equation}
where $\upc{B}, \lowc{B}, \cen{B}, \off{B} \in \mathbb{R}^n$. A box
$E$ has zero-volume when $\off{E}=\vzero$ (Figure \ref{fig:box}).

\paragraph{Geometric projection.} To model a relation
$r \in \mathcal{R}$, we represent its embedding as a 4-tuple
$\relemb{r}$. Given a box embedding $B=(\cen{B}, \off{B})$, we denote
as $B' = T_\vr(B)$ the transformation with respect to the 4-tuple
$\vr$ that generates the box $B'=(\cen{B'}, \off{B'})$:
\begin{equation}
  \label{eq:projection}
  \cen{B'} = r_1\otimes \cen{B} + r_2, \quad \off{B'} = |r_3 \otimes \off{B} + r_4|
\end{equation}
where $\otimes$ is coordinate-wise product between vectors. While
methods such as Query2Box~\cite{query2box} use a projection of the
form $\vr = (\vone, v_2, \vone, v_4)$ (with only additive components),
we implement a more expressive transformation by incorporating
multiplicative components, which improves the quality of embeddings
while maintaining geometric interpretability. The transformation $T_r$
(i.e., $T(r) = (T \circ T)(r)$) is idempotent in two cases: when
$T_r = id$ (the identity function) and when
$T(r)=(\vzero, v_2, \vzero, v_4)$ (the constant function).  For
formulas $r(A,B)$ and $\neg r(A,B)$, we use different relation
embeddings for $r$ and $\neg r$.

\paragraph{Geometric Intersection.} Given a set of boxes
$\{B_1, \ldots, B_n\}$, we compute the intersection box $I$ as:
\begin{equation}
  \label{eq:intersection}
  I = (\max(\lowc{B_1},\ldots, \lowc{B_n}), \min(\upc{B_1},\ldots, \upc{B_n}))
\end{equation}
where $\min$ and $\max$ operations are performed coordinate-wise
(Figure \ref{fig:intersection}).

 Prior works replaced geometric intersection with neural networks
  (DeepSets) to avoid vanishing gradients when boxes are disjoint
  \cite{Mei_2024,boxcd,improving_ident}. In a pure volume-based
  overlap, disjoint boxes yield zero signal. However, our method
  remains stable because our optimization objective
  (Eq. \ref{eq:inclusion}) minimizes the distance between the query
  box corners and the answer entity, rather than maximizing box
  overlap volume.  Even if two parent boxes $B_1$ and $B_2$ are
  disjoint, the resulting intersection box $I$ (calculated via
  coordinate-wise max and min) produces coordinates that are
  effectively `inverted' in the disjoint dimensions. The distance
  function then remains differentiable with respect to these
  corners. Consequently, the backpropagation signal effectively pulls
  the boundaries of the parent boxes $B_1$ and $B_2$ towards the
  answer $\va$, forcing the boxes to move until they overlap.
  
  In terms of complexity, processing each $n$-dimensional box using,
  for instance, an multi-layer perceptron (MLP) with $L$ layers, input
  size $n$ and hidden size $d$ requires $O(n\cdot d + L \cdot d^2)$
  operations and the complexity of computing the intersection for $k$
  boxes is $O(k\cdot (n \cdot d + L\cdot d^2))$. Cones are represented
  as two $n$-dimensional vectors~\cite{cone,acone}, the complexity of
  computing intersection has the same bounds as Query2Box. On the
  other hand, following the geometric definition of
  Equation~\ref{eq:intersection} requires $O(k\cdot n)$ operations.
  The geometric definition also provides direct geometric
  interpretability, which is not the case with neural networks used in
  previous works~\cite{gqe,query2box,betae,cone,acone}.

\paragraph{Geometric negation.}
The geometric complement of a box is not a box (it is the union of
open half-spaces), preventing closed-form logical negation. However,
in multi-hop reasoning, negation rarely appears in isolation (e.g.,
`give me everything that is not a Person'); it typically appears as a
constraint upon a positive query (e.g., `European countries that are
not in the EU').

  We model this specific structure not as a set complement, but as a
  geometric exclusion constraint. Given a positive query box $B_p$ and
  a negative query box $B_n$, we model the query $B_p \land \neg B_n$
  by enforcing the answer embeddings to minimize distance to $B_p$ while
  maximizing the distance from the interior of $B_n$. This avoids the
  need to explicitly represent the non-box shape of $\neg B_n$ in the
  latent space.

While this design does not
support absolute complement, it effectively models the negation form
typical in KG queries (e.g., `A but not B'), making it highly
applicable to standard benchmarks.

\subsection{Learning \method{} embeddings}

\begin{figure}[tbp]
  \Description{Query patterns used for training and evaluation}
  \centering
  \includegraphics[width=0.9\textwidth]{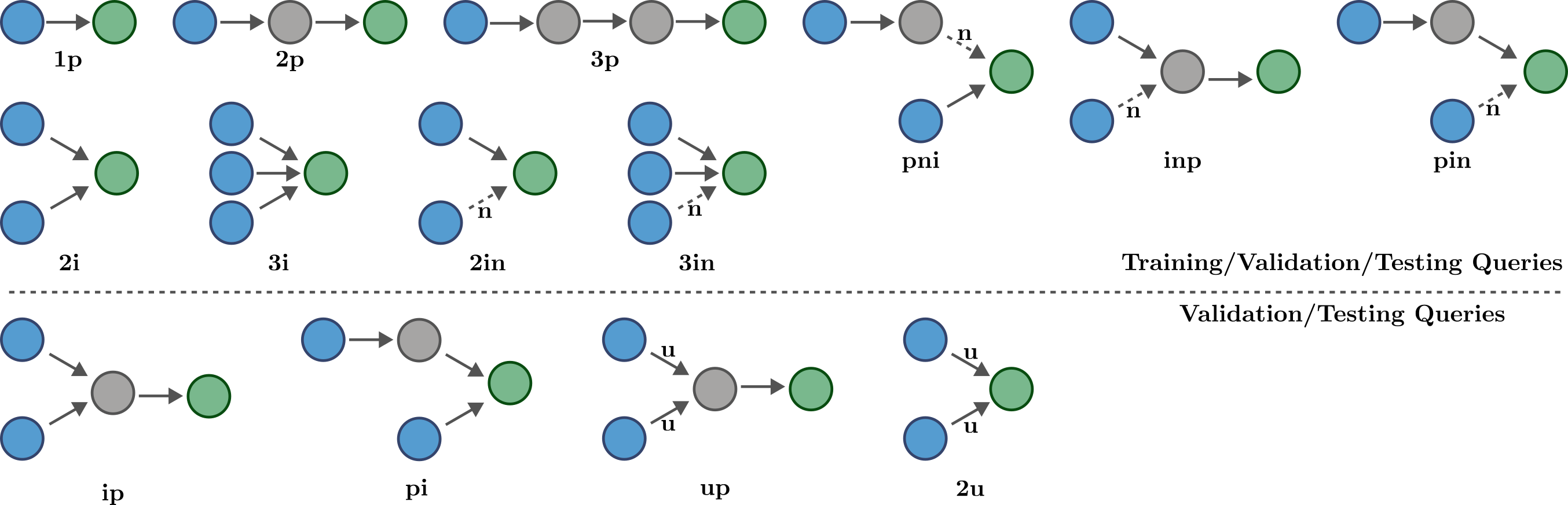}
  \caption{Query types supported by \method{}. Blue nodes represent
    anchor entities. Gray nodes represent intermediate operations
    (relation projection, intersection, union, negation) and green
    nodes represent answer entities.}
  \label{fig:query_types}
\end{figure}

\paragraph{Standard distance function.} The geometric construction of
\method{} supports queries shown in
Figure~\ref{fig:query_types}. Every query anchor node $q$ (blue nodes
in Figure~\ref{fig:query_types}) is associated to a box embedding
$\vq=(\cen{q}, \off{q})$, where $\cen{q}, \off{q} \in \mathbb{R}^n$,
and an answer node $a$ (green nodes in Figure~\ref{fig:query_types})
are associated to a zero-volume box
$\va=(\cen{a},\vzero)$. Intermediate nodes (gray nodes in
Figure~\ref{fig:query_types}) are queries constructed using relation
projection, intersection, negation, or union operations.

A node in the graph can appear both as anchor or answer node. We
differentiate between these two cases in the embedding space and
associate a node $a$ in the graph to a query embedding
$\query{\va}=(\cen{a}, \off{a})$ and to an answer embedding
$\answer{\va} =(\cen{a}', \vzero)$. As part of the optimization, we
enforce $\dist{box} (\query{\va}, \answer{\va}) = 0$.

In the end, to associate a query embedding $\vq$ to every answer
embedding $\va$ in the answer set $\llbracket q \rrbracket$, we use
the function:
\begin{equation}
  \label{eq:inclusion}
  \dist{box}(\vq, \va) = \dist{out}(\vq,\va) + \alpha \cdot \dist{in}(\vq,\va)
\end{equation}
where $\alpha$ is a hyperparameter that helps to enforce $\va$ to be
inside $\vq$ but not necessarily be equal to $\cen{q}$, and:
\begin{align}
  \dist{out}(\vq, \va) &= ||\max(\cen{a}' - \upc{q}, \vzero) + \max(\lowc{q} - \cen{a}',\vzero)||_1\\
  \dist{in}(\vq, \va) &= ||\cen{q} - \min(\upc{q}, \max(\lowc{q}, \cen{a}'))||_1
\end{align}
vectors $\upc{q}, \lowc{q}$ are computed from $\cen{q}$ and $\off{q}$
following Equation~\ref{eq:box}.

\paragraph{Negation distance function}

For queries that include negation, we rely on the assumption that the
positive and negative components of the query must be disjoint. Thus,
we use Equation~\ref{eq:inclusion} to compute the distance to the
positive component. For the negative component of the query, we
introduce the following exclusion distance function:

\begin{equation}
  \label{eq:exclusion}
  \dist{exclusion}(\vq, \va) = -\dist{out}(\vq,\va) + \alpha \cdot \dist{in}(\vq,\va)
\end{equation}

Unlike the standard distance function, $\dist{out}$ is negated, which
enforces the answer to be outside the query box. However, we keep
$\dist{in}$ positive with a regularization weight $\alpha$ to prevent
the answer embedding from moving arbitrarily far from the query region.

\paragraph{Order-preserving function}

Additionally, we incorporate an order-preserving function to account
for the transitivity property of some relations. A transitive relation
$r$ is such that $\forall a,b,c: r(a,b) \land r(b,c) \to r(a,c)$
(logic form) or $r\circ \ldots \circ r \equiv r$ (rule form). In both
cases, the transformation $T_r$ must be idempotent. Additionally, from
the logic form it follows that if
$\dist{box}(T_r(\va_{\mbox{\scriptsize query}}),\vb_{\mbox{\scriptsize
    answer}})=0$ and
$\dist{box}(T_r(\vb_{\mbox{\scriptsize query}}),\vc_{\mbox{\scriptsize
    answer}})=0$ then
$\dist{box}(T_r(\va_{\mbox{\scriptsize query}}),\vc_{\mbox{\scriptsize
    answer}})=0$, or approximates
to $0$. However, following Equation~\ref{eq:inclusion}, the embedding
model might collapse because it cannot meaningfully represent
$a,b,c,r$ at the same time. For example, w.l.o.g., assume
$\off{a} = \off{b} = \off{c} = \vzero$; then, with $\relemb{r}$ and
$r_4=\vzero$, we have that Equation~\ref{eq:inclusion} is minimized
when $r_1\otimes \cen{a} + r_2=\cen{b}$,
$r_1\otimes \cen{b} + r_2=\cen{c}$,
$r_1\otimes \cen{a} + r_2=\cen{c}$.

Therefore, to encode transitivity, we use the following
distance function for transitive relations:

\begin{equation}
  \label{eq:transitive_inclusion}
  \dist{box\_tr}(\vq, \va) = \dist{box}(\proj{\vq}{i},\proj{{\va}}i) +
  \dist{ordering}(\vq[i],\va[i])
\end{equation}
where $\proj{\vq}{i},\proj{\va}{i}$ are projected versions of boxes
where both centers and offsets have the value at coordinate $i$
removed. Additionally, at dimension $i$, we enforce an order embedding
\begin{equation}
  \label{eq:ordering}
  \dist{ordering}(\vq[i], \va[i]) =  ||\max(\cen{a}'[i] - \lowc{q}[i] + \lambda, 0) ||_1
\end{equation}

The idea behind $\dist{ordering}$ is to preserve the ordered structure
of transitive chains. Thus, for our previous example, $\dist{box\_tr}$
might collapse in all dimensions except $i$, where $\dist{ordering}$
will be minimized when $\cen{a} > \cen{b}'$, $\cen{b} > \cen{c}'$ and
$\cen{a} > \cen{c}'$. In the extreme case that query boxes have volume
$0$, we have that $\cen{a} > \cen{b} > \cen{c}$. The parameter
$\lambda$ ensures a minimum distance between entities.  Each
transitive relation is assigned a different projection dimension. In
the case that a transitive relation and its inverse (also transitive)
are present in the graph, the same dimension is used for both
relations and the distance function is modified to:

\begin{equation}
  \label{eq:ordering_inverse}
  \dist{ordering\_inverse}(\vq[i], \va[i]) =  ||\max(\upc{q}[i] -
  \cen{a}'[i] + \lambda, 0) ||_1
\end{equation}

Since the function $\dist{ordering}$ reserves specific dimensions of
$\mathbb{R}^n$ for ordering, the number of transitive relations $m$ is
constrained by $m\leq n$.

\paragraph{Training Objective.} Given a query $q$, our training
objective aims to minimize distance between the \method{} embedding of
$q$ and its answers while maximizing distance between $q$ and randomly
sampled negative answers~\cite{rotate}:
\begin{equation}
  \label{eq:loss}
  L = -\log \sigma (\gamma - \dist{box\_tr}(\vq,\va)) -
  \frac{1}{k}\sum_{i=1}^k \log
  \sigma(\dist{box\_tr}(\vq,\va'_i)-\gamma)
\end{equation}
where $\gamma>0$ is margin hyperparameter, $k$ is the number of
negative samples, $\va'_i $ is the $i$-th negative sample and
$\sigma(\cdot)$ is the sigmoid function.

\paragraph{Transitive Relation Loss.} Given a transitive relation $r$, with
embeddings $\vr = (r_1, r_2, r_3, r_4)$ and projection dimension $i$,
we optimize its projected embedding $\proj{\vr}{i} \in \mathbb{R}^{n-1}$ to approximate the
identity function. Thus:
\begin{equation}
  \label{eq:reg_loss}
  L_{\proj{\vr}{i}} = ||\proj{r_1}{i}-1||_1 + ||\proj{r_3}{i}-1||_1 +
  ||\proj{r_2}{i}||_1 + ||\proj{r_4}{i}||_1
\end{equation}

The modified distance function $\dist{box\_tr}$ together with the
transitive relation loss function $L_{\vr_i}$ provide guarantees that the
generated embeddings will preserve transitivity. That is, given a
transitive relation $r$, we have
$\forall a,b,c \in \mathcal{V}: r(a,b) \land r(b,c) \to r(a,c)$
(Theorem \ref{thm:transitivity}).

\begin{theorem}[Transitive Inference Property]
  \label{thm:transitivity}
  Let $(a,r,b), (b,r,c) \in \mathcal{E}$ with $r$ being a transitive
  relation, but $(a,r,c) \not\in \mathcal{E}$. Let embeddings
  $\query{\va}, \query{\vb}, \query{\vc}$ and
  $\answer{\va}, \answer{\vb}, \answer{\vc}$ for $a,b,c$ and relation
  embedding for $r$ be optimized using the loss function
  $L' = L + L_{\proj{\vr}{i}}$ until convergence. Let $i$ be the
  dimension chosen for $r$ in $\dist{ordering}$. If
  $\dist{box\_tr}(T_r(\query{\va}), \answer{\vb}) = 0$ and
  $\dist{box\_tr}(T_r(\query{\vb}), \answer{\vc}) = 0$, then
  $\dist{box\_tr}(T_r(\query{\va}), \answer{\vc}) = 0$ with
  $\answer{\vc}[i] < \answer{\vb}[i] < \answer{\va}[i]$.
\end{theorem}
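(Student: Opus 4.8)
The plan is to decompose $\dist{box\_tr}$ (Equation~\ref{eq:transitive_inclusion}) into its two summands and to analyze the $n-1$ ``box'' coordinates $j \neq i$ separately from the single ``ordering'' coordinate $i$. The first ingredient is that, at convergence, the transitive relation loss $L_{\proj{\vr}{i}}$ attains its minimum; by Equation~\ref{eq:reg_loss} this forces $\proj{r_1}{i} \approx \vone$, $\proj{r_3}{i} \approx \vone$, $\proj{r_2}{i} \approx \vzero$ and $\proj{r_4}{i} \approx \vzero$, so by Equation~\ref{eq:projection} the transformation $T_\vr$ acts as (approximately) the identity on every coordinate $j \neq i$; in particular $\proj{T_\vr(B)}{i} \approx \proj{B}{i}$ for every box $B$. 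I would first establish the statement in the exact regime---replacing each ``$\approx 0$'' by ``$=0$'' and letting $T_\vr$ be exactly the identity off coordinate $i$---and then recover the stated approximate version by continuity, since every quantity below is a continuous (indeed piecewise-linear) function of the embeddings and of the residual losses.

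For the ordering coordinate I would unfold $\dist{ordering}$ (Equation~\ref{eq:ordering}). The first hypothesis $\dist{box\_tr}(\query{\va}, \answer{\vb}) \approx 0$ forces its ordering summand to vanish, giving $\max(\cen{\answer{\vb}}[i] - \cen{\query{\va}}[i] + \lambda, 0) \approx 0$, i.e. $\cen{\query{\va}}[i] \geq \cen{\answer{\vb}}[i] + \lambda$; the second hypothesis gives $\cen{\query{\vb}}[i] \geq \cen{\answer{\vc}}[i] + \lambda$. The essential glue is the training constraint $\dist{box}(\query{\vx}, \answer{\vx}) = 0$, enforced for every node, which identifies the query and answer centers of a node so that $\cen{\query{\vb}}[i] \approx \cen{\answer{\vb}}[i]$. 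Chaining the two inequalities through this identification yields $\cen{\query{\va}}[i] \geq \cen{\answer{\vc}}[i] + 2\lambda$, hence $\max(\cen{\answer{\vc}}[i] - \cen{\query{\va}}[i] + \lambda, 0) = 0$, i.e. the ordering summand of $\dist{box\_tr}(\query{\va}, \answer{\vc})$ is $\approx 0$. The same two inequalities, read on coordinate $i$ and combined with the node identification, deliver the strict chain $\answer{\va}[i] > \answer{\vb}[i] > \answer{\vc}[i]$, the margin $\lambda > 0$ guaranteeing strictness.

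For the coordinates $j \neq i$ I would show that the three answer boxes collapse, at the projected level, onto $a$'s projected query box. Because $\dist{box} = \dist{out} + \alpha\,\dist{in}$ with $\alpha > 0$, the condition $\dist{box} \approx 0$ forces both $\dist{out} \approx 0$ (the zero-volume answer point lies inside the query box) and $\dist{in} \approx 0$ (the clamped answer point coincides with the query-box center); hence the answer center equals the query-box center on these coordinates. Let $\vq := T_\vr(\query{\va})$ be the query box of the atom $r(a, ?)$; since $\proj{\vq}{i} \approx \proj{\query{\va}}{i}$, the first hypothesis makes $\proj{\answer{\vb}}{i}$ share the center of $\proj{\vq}{i}$, the node-identification constraint transfers this to $\proj{\query{\vb}}{i}$, and the second hypothesis carries it on to $\proj{\answer{\vc}}{i}$. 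Thus $\proj{\answer{\vc}}{i}$ sits at the center of $\proj{\vq}{i}$; as $\vq$ is a valid box (its offset is non-negative by the absolute value in Equation~\ref{eq:projection}), this center lies inside it and $\dist{box}(\proj{\vq}{i}, \proj{\answer{\vc}}{i}) \approx 0$. Adding the box and ordering contributions gives $\dist{box\_tr}(\query{\va}, \answer{\vc}) \approx 0$, the assertion.

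The main obstacle is not any individual step but making the repeated ``$\approx$'' rigorous: each hypothesis and the node-identification constraint contribute a small residual, and these residuals are chained three times, so the final slack on $\dist{box\_tr}(\query{\va}, \answer{\vc})$ is a sum of the individual residuals together with the residual of $L_{\proj{\vr}{i}}$ measuring the deviation of $T_\vr$ from the identity off coordinate $i$. I would therefore present the exact ($=0$) case as the clean statement and phrase the approximate case as a Lipschitz estimate, checking that the accumulated error stays linear in the input residuals. The one place requiring a genuine sign argument rather than an equality is the ordering coordinate, where the additive margin $\lambda$ both absorbs the residuals and produces the strict inequalities; keeping track of whether $2\lambda$ (versus $\lambda$) worth of slack survives the approximations is the delicate bookkeeping I would be most careful about.
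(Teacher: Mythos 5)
Your proposal follows essentially the same route as the paper's proof: split $\dist{box\_tr}$ into the ordering coordinate $i$ and the remaining coordinates, use convergence of $L_{\proj{\vr}{i}}$ to make $T_\vr$ (approximately) the identity off coordinate $i$, chain the two $\lambda$-margin inequalities through the query/answer identification on coordinate $i$, and propagate containment on the other coordinates. You are in fact somewhat more careful than the paper --- you explicitly invoke the constraint $\dist{box}(\query{\vx},\answer{\vx})=0$ where the paper uses it silently, you replace the paper's unjustified ``transitivity of box containment'' step by the center-coincidence forced by $\dist{in}$ when $\alpha>0$, and your concluding order $\answer{\va}[i]>\answer{\vb}[i]>\answer{\vc}[i]$ matches the theorem statement (the paper's final line inverts it).
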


\begin{proof}
  The minimization of $L_{\proj{\vr}{i}}$ ensures that the relation
  transformation $T_r$ approaches idempotency in dimension $i$, with
  $\proj{r_1}{i} \approx 1$, $\proj{r_3}{i} \approx 1$,
  $\proj{r_2}{i} \approx 0$, and $\proj{r_4}{i} \approx 0$.
  
  For the observed triples $(a,r,b)$ and $(b,r,c)$, the distance
  function $\dist{box\_tr}$ approaching zero implies:
  
  \begin{enumerate}
  \item For dimensions other than $i$, all answer entities lie
    within their corresponding query boxes after relation projection:
    $\answer{\vb} \in T_r(\query{\va})$ and
    $\answer{\vc} \in T_r(\query{\vb})$
  
    \item For dimension $i$, the ordering constraints are satisfied:
    $\cen{\answer{\vb}}[i] \leq \lowc{\query{\va}}[i] - \lambda$ and
    $\cen{\answer{\vc}}[i] \leq \lowc{\query{\vb}[i]} - \lambda$
  \end{enumerate}
  From the first constraint, and given that $T_r$ approaches an
  identity transformation in dimension $i$, we have
  $T_r(\query{\va})[i] \approx \query{\va}[i]$ and
  $T_r(\query{\vb})[i] \approx \query{\vb}[i]$.
  
  The idempotency property of box transformation in non-$i$ dimensions
  (Equation \ref{eq:reg_loss}) implies that if
  $\answer{\vb} \in T_r(\query{\va})$ and
  $\answer{\vc} \in T_r(\query{\vb})$, then
  $\answer{\vc} \in T_r(\query{\va})$ for all dimensions except
  $i$
  For dimension $i$, we have:
  $\cen{\answer{\vc}[i]} \leq \lowc{\query{\vb}[i]} - \lambda$ and
  $\cen{\answer{\vb}[i]} \leq \lowc{\query{\va}}[i] - \lambda$.
  
  Since every answer embedding is optimized to lie inside its query
  embedding, we get:
  $\cen{\answer{\vc}}[i] \leq \lowc{\query{\vb}}[i] - \lambda \leq
  \cen{\answer{\vb}}[i]$ and
  $\cen{\answer{\vb}}[i] \leq \lowc{\query{\va}}[i] - \lambda \leq
  \cen{\answer{\va}}[i]$. Thus,
  $\cen{\answer{\vc}}[i] \leq \lowc{\query{\va}}[i] - \lambda \leq
  \cen{\answer{\va}}[i]$, satisfying the ordering constraint for
  $(a,r,c)$. Therefore, we obtain
  $\dist{box\_tr}(\query{\va}, \answer{\vc}) \approx 0$ even though
  $(a,r,c) \not\in \mathcal{E}$, and the embeddings maintain the
  ordering
  $\vc_{\mbox{\scriptsize answer}}[i] < \vb_{\mbox{\scriptsize
      answer}}[i] < \va_{\mbox{\scriptsize answer}}[i]$.
\end{proof}

\section{Experimental Results}

\subsection{Experimental Settings}
We follow the same experimental procedure from previous methods
\cite{gqe,query2box,betae,cone} and evaluate query answering over
incomplete knowledge graphs.
\paragraph{Datasets.} We use three datasets: WN18RR-QA~\cite{line},
NELL-QA~\cite{deeppath} and FB15k-237
(FB237)~\cite{toutanova2015observed}. The datasets contain training
$\mathcal{G}_{\mbox{\scriptsize train}}$, validation
$\mathcal{G}_{\mbox{\scriptsize valid}}$ and testing
$\mathcal{G}_{\mbox{\scriptsize test}}$ graphs such that
$\mathcal{G}_{\mbox{\scriptsize train}} \subseteq
\mathcal{G}_{\mbox{\scriptsize valid}} \subseteq
\mathcal{G}_{\mbox{\scriptsize test}}$. For a query $q$, its answer
set follows the inclusions
$\llbracket q \rrbracket_{\mbox{\scriptsize train}} \subseteq
\llbracket q_{\mbox{\scriptsize valid}} \rrbracket \subseteq
\llbracket q_{\mbox{\scriptsize test}} \rrbracket $ and we take 
the set
$\llbracket q_{\mbox{\scriptsize test}} \rrbracket \backslash
\llbracket q_{\mbox{\scriptsize valid}} \rrbracket$ for testing. The
training and validation queries contain five types of positive
conjunctive queries ($1p,2p,3p,2i,3i$) and five types of queries with
negation ($2in,3in,inp,pin,pni$). Furthermore, the testing set
contains extra query types ($ip,pi,2up,up$) which are used to measure
the generalization capability of the method. We show additional
dataset information such as training/testing splits in
Appendix~\ref{app:datasets}.

\paragraph{Training protocol and baselines.} We trained our method
using gradient descent optimization and used the Adam
optimizer~\cite{adam}. We performed grid-search to select optimal
hyperparameters for our method. We show the selected hyperparameters
in Appendix~\ref{app:hyperparameters}.  For WN18RR-QA, we report the
results from~\cite{line} for all methods, except for ConE, for which
we reproduce the results. For NELL and FB237, we report the results
from~\cite{cone}. For the recent method AConE~\cite{acone}, we could
not find an available implementation, therefore we do not compare
against this method directly but provide a specific discussion for
this method. We also include NMP-QEM~\cite{long-etal-2022-neural}, a
neural method, with which we do not compare directly but use it as a
reference. Regarding our method, an important aspect to highlight is
the initial choice of answer embeddings. While we could in principle
associate an answer node with is own embedding, we also experimented
fixing the answer embedding to be the same as the center of the query
embedding of the same entity. For our main experiments, we report the
aggregated results from eight runs with different random seeds, which
we show in Appendix~\ref{app:stats}. Finally, every experiment took
around 20 hours on average and, throughout all experiments, we used
the following GPUs: NVIDIA GeForce GTX 1080 Ti, Quadro P6000, Tesla
P100 and Tesla V100.

\paragraph{Evaluation protocol.} For every query $q$ in the testing
set, we select every answer
$v \in \llbracket q \rrbracket_{\mbox{\scriptsize test}} \backslash
\llbracket q \rrbracket_{\mbox{\scriptsize valid}}$ and we rank it
against all entities
$\mathcal{V} \backslash \llbracket q \rrbracket_{\mbox{\scriptsize
    test}}$. Given the obtained rank $rank(v)$, we report Mean
Reciprocal Rank (MRR) as follows:

\begin{equation}
  \label{eq:mrr}
  \mbox{MRR}(q) = \frac{1}{|\llbracket q \rrbracket_{\mbox{\scriptsize test}} \backslash \llbracket q
    \rrbracket_{\mbox{\scriptsize valid}}|} \sum_{v \in \llbracket q \rrbracket_{\mbox{\tiny test}} \backslash \llbracket q
    \rrbracket_{\mbox{\tiny valid}}}{rank(v)}
\end{equation}

\subsection{Results}

\paragraph{RQ1: Can we rely on purely geometric operations to model
  complex KG queries?}

Table \ref{tab:rq1_12} (left part) shows the results on queries
without negation. We can see that \method{} outperforms baselines
in most metrics (except $3p$ in WN18RR and $2p,3p,pi,up$ in
FB237). More importantly, we observe that the intersection operation,
which baseline methods learn through a neural network
can be modeled by \method{} using only geometric intersection
operations, with outperforming results.

\begin{table}[tbp]
  \centering
  \caption{Results on several query answering datasets over positive
    and negative queries. Baseline results for WN18RR were obtained
    from LinE~\cite{line} and we reproduced ConE~\cite{cone}. Baseline
    results for NELL and FB237 were obtained from ConE~\cite{cone} and
    NPM-QEM~\cite{long-etal-2022-neural}. AConE results are obtained
    from \cite{acone}.}
  \adjustbox{width=0.98\textwidth}{
  \begin{tabular}{p{0.3cm}lrrrrrrrrr|rrrrr}
    \toprule
    
    \multicolumn{2}{c}{} & \multicolumn{9}{c|}{Positive queries} & \multicolumn{5}{c}{Negative queries} \\
    \cmidrule{2-16}
    \multicolumn{1}{c}{}  &   Model     &1p          &2p          &3p          &2i          &3i          &pi          &ip          &2u          &up & 2in        & 3in        & inp        & pin       & pni\\
    \cmidrule{2-16}
    \multirow{6}{*}{\rotatebox{90}{WN18RR}} &GQE & 18.0 & 4.5 & 2.7 & 19.3 & 23.9 & 9.9 & 10.6 & 2.3 & 3.5 & -- & -- & -- & -- & -- \\
&Q2B & 22.4 & 4.6 & 2.3 & 25.6 & 41.2 & 13.2 & 11.0 & 2.9 & 3.4 & -- & -- & -- & -- & -- \\
&\betae{} & 44.1 & 9.8 & 3.8 & 57.2 & 76.2 & 32.6 & 17.9 & 7.5 & 5.3 & \s{12.7} & 59.9 & 5.1 & 4.0 & 7.4 \\
&LinE & 45.1 & 12.3 & 6.7 & 47.1 & 67.1 & 24.8 & 14.7 & 8.4 & 6.9 & 12.5 & 60.8 & 7.3 & 5.2 & \s{7.7} \\
&ConE & \s{46.8} & \s{17.1} & \f{12.4} & \s{58.9} & \s{86.7} & \s{33.9} & \s{20.8} & \s{8.8} & \s{12.2} & \s{12.7} & \s{62.3} & \s{13.2} & \s{7.7} & 7.6 \\
&\method{} & \f{52.3} & \f{18.5} & \s{11.5} & \f{66.8} & \f{88.9} & \f{39.5} & \f{25.8} & \f{13.2} & \f{13.3} & \f{16.4} & \f{68.3} & \f{13.4} & \f{8.0} & \f{9.4} \\
    \cmidrule{3-16}
                         & \g{AConE} & \g{50.9} & \g{17.6} & \g{9.9} & \g{70.5} & \g{89.0} & \g{38.9} & \g{29.6} & \g{18.4} & \g{14.0} & \g{18.3} & \g{70.1} & \g{13.4} & \g{7.6} & \g{10.4}\\
                         & \g{NMP-QEM} & \g{53.1} & \g{24.3} & \g{14.1} & \g{68.5} & \g{86.6} & \g{38.2} & \g{19.2} & \g{12.7} & \g{13.2}  & \g{24.2} & \g{68.2} & \g{19.8} & \g{12.0} & \g{16.3} \\
    \cmidrule{2-16}
    \multirow{5}{*}{\rotatebox{90}{NELL}}   & GQE       & 33.1     & 12.1     & 9.9      & 27.3     & 35.1     & 18.5     & 14.5     & 8.5      & 9.0      &--        &--        &--        &--       &--\\
                                            & Q2B       & 42.7     & 14.5     & 11.7     & 34.7     & 45.8     & 23.2     & 17.4     & 12.0     & 10.7     &--        &--        &--        &--       &--\\
                                            & \betae{}  & 53.0     & 13.0     & 11.4     & 37.6     & 47.5     & 24.1     & 14.3     & 12.2     & 8.5      & 5.1      & 7.8      & 10.0     & 3.1     & 3.5 \\  
                                            & ConE      & \s{53.1} & \s{16.1} & \s{13.9} & \s{40.0} & \s{50.8} & \s{26.3} & \s{17.5} & \s{15.3} & \s{11.3} & \s{5.7}  & \s{8.1}  & \s{10.8} & \s{3.5} & \s{3.9}\\
                                            & \method{} & \f{58.2} & \f{17.6} & \f{15.6} & \f{40.8} & \f{51.5} & \f{27.7} & \f{19.6} & \f{16.1} & \f{12.8} & \f{6.2}  & \f{8.4}  & \f{11.0} & \f{4.0} & \f{4.3}\\
    \cmidrule{3-16}
                         & \g{AConE}     & \g{54.5} & \g{17.7} & \g{14.4} & \g{41.9} & \g{53.0} & \g{26.1} & \g{20.7} & \g{16.5} & \g{12.8} & \g{5.2} & \g{7.7} & \g{9.4} & \g{3.2} & \g{3.7} \\
                         & \g{NMP-QEM} & \g{68.8} & \g{23.9} & \g{17.8} & \g{47.0} & \g{55.0} & \g{31.1} & \g{26.0} & \g{29.6} & \g{20.7} & \g{10.0} & \g{9.2} & \g{12.9} & \g{4.8} & \g{7.4}\\
    \cmidrule{2-16}
    \multirow{5}{*}{\rotatebox{90}{FB237}}  &   GQE     & 35.2     & 7.4      & 5.5      & 23.6     & 35.7     & 16.7     & 8.4      & 5.8      & 4.6      &--        &--        &--        &--       &--\\
                                            & Q2B       & 41.3     & 9.9      & 7.2      & 31.1     & 45.4     & 21.9     & 11.9     & 9.3      & 7.1      &--        &--        &--        &--       &--\\
                                            & \betae{}  & 39.0     & 10.9     & 7.6      & 30.5     & 42.3     & 21.4     & 11.6     & 9.0      & \s{10.2} & 5.1      & 7.9      & 7.4      & 3.6     & 3.4 \\    
                                            & ConE      & \s{41.8} & \f{12.8} & \f{11.0} & \s{32.6} & \s{47.3} & \f{25.5} & \s{14.0} & \s{12.5} & \f{10.8} & \s{5.4}  & \s{8.6}  & \f{7.8}  & \s{4.0} & \f{3.6} \\
                                            & \method{} & \f{44.2} & \s{11.6} & \s{9.7}  & \f{33.8} & \f{48.6} & \s{25.2} & \f{14.8} & \f{14.1} & 10.0     & \f{6.0}  & \f{11.2} & \s{7.7}  & \f{4.6} & \f{3.6} \\
    \cmidrule{3-16}
                         &     \g{NMP-QEM} & \g{46.2} & \g{12.9} & \g{11.3} & \g{35.0} & \g{47.8} & \g{25.6} & \g{15.1} & \g{15.0} & \g{10.9} & \g{6.8} & \g{11.7} & \g{8.2} & \g{5.5} & \g{4.7} \\
    \bottomrule
  \end{tabular}
  }
    \label{tab:rq1_12}
\end{table}

Additionally, Table \ref{tab:rq1_12} (right part) shows the results on
queries that include the negation operator. We can see that the
exclusion term for the negative component of the query is effective
and enables our method to outperform baselines for all queries and
datasets, (except $inp$ in FB237). When comparing with a
neural-method, \method{} approaches the performance of NMP-QEM in
WN18RR and FB237. Nevertheless, \method{} provides a direct geometric
interpretation of the embeddings and operators.

Regarding AConE ~\cite{acone}, we could not reproduce or test the
method on every dataset due to lack of implementation.  However, when
comparing with AConE published results, we find that GeometrE performs
mostly the same as AConE in WN18RR and NELL. In relational queries (1p,2p,3p) both methods have
geometric operations and GeometrE surpasses AConE in WN18RR (27.4
vs. 26.1) and NELL (30.5 vs. 28.9) in average MRR. For intersection
queries (2i,3i,pi,ip), where AConE employs a neural component,
GeometrE underperforms AConE in WN18RR (55.2 vs. 57.0) and NELL (34.9
vs. 35.4) in average MRR revealing a trade-off for maintaining full
geometric interpretability instead of relying on black-box neural
networks. For negation queries (2in,3in,inp,pin,pni), GeometrE is
underperforms AConE in WN18RR (23.1 vs. 24.0) but outperforms in NELL
(6.8 vs. 5.8). Therefore, GeometrE keeps mostly the same performance
as AConE, while preserving geometric interpretability.

\paragraph{RQ2: Does a transitive loss improve multi-hop reasoning capabilities?}

To answer whether the proposed transitive loss function can enhance
the multi-hop reasoning capabilities, we evaluate \method{} on WN18RR
and report the aggregated results (mean and standard deviation) of
eight experiments with different random seeds
(Table~\ref{tab:transitive}). For this evaluation, we considered
relations \texttt{hypernym} and \texttt{has\_part} from WN18RR as
transitive~\cite{line}.
The global evaluation over WN18RR graph show a slight improvement
(statistically non significant) in the presence of a transitive loss
function across 1-hop, 2-hop and 3-hop queries.

\begin{table}[tbp]
  \centering
  \caption{Transitive vs non-transitive loss on the WN18RR dataset}
  \begin{adjustbox}{width=0.98\textwidth}
    \begin{tabular}{lccccccccc}
      \toprule
      \multirow{2}{*}{Model} & \multicolumn{4}{c|}{1-hop} & \multicolumn{4}{c|}{2-hop} & \multicolumn{1}{r}{3-hop}\\
      \cmidrule{2-10}
                             &1p                               & ip                               &2in                              & pni                       &2p                         &up                         & inp                        &pin                       &3p           \\
\method{}-notr & 52.2{\scriptsize $\pm$0.1 } & 25.8{\scriptsize $\pm$0.2 } & \f{16.4}{\scriptsize $\pm$0.2 } & \f{9.4}{\scriptsize $\pm$0.1 } & 18.5{\scriptsize $\pm$0.1 } & 13.1{\scriptsize $\pm$0.2 } & 13.2{\scriptsize $\pm$0.2 } & 7.8{\scriptsize $\pm$0.2 } & 11.3{\scriptsize $\pm$0.2 }  \\
      \method{}-tr & \f{52.3}{\scriptsize $\pm$0.1 } & \f{25.9}{\scriptsize $\pm$0.2 } & 16.3{\scriptsize $\pm$0.2 } & \f{9.4}{\scriptsize $\pm$0.1 } & \f{18.6}{\scriptsize $\pm$0.1 } & \f{13.4}{\scriptsize $\pm$0.2 } & \f{13.4}{\scriptsize $\pm$0.1 } & \f{8.1}{\scriptsize $\pm$0.1 } & \f{11.5}{\scriptsize $\pm$0.1 }  \\
      \bottomrule
  \end{tabular}
  \end{adjustbox}
  \label{tab:transitive}
\end{table}

Therefore, we investigated the number of queries involving composition
of transitive relations in WN18RR. We found that, for 2-hop and 3-hop
queries, the percentages of \texttt{training/validation/testing}
queries are low: 2p (5.3/1.9/2.3), up (0.0/1.5/0.7), inp (1.5 /2.2
/2.3), pin (4.1 /2.2 /2.0), 3-3p (3.5/2.6/2.1). This constitutes a
potential reason for the negligible improvements in the general
benchmark evaluation.

For that reason, we conducted a targeted evaluation to verify our
claim that GeometrE preserves the transitivity formula $\forall a,b,c:
r(a,b) \land r(b,c) \to r(a,c)$. By constructing transitive chains, we
showed that GeometrE achieves significantly better ordering
preservation (Spearman correlation 0.87 vs 0.12) compared to models
without the transitive loss.

However, the number of transitive relations considered is small and
the impact of the transitive loss function might be overlooked in the
overall evaluation. For this reason, we specifically analyze
transitive relations. We computed the transitive chains in the
training set from relations \texttt{+\_hypernym}, and
\texttt{+\_has\_part} (we omit \texttt{-\_hypernym} and
\texttt{-\_has\_part}\footnote{``+'' and ``-'' in the relation name
  indicates the relation name or its inverse. Due to distance
  functions in Equations \ref{eq:ordering} and
  \ref{eq:ordering_inverse}, a transitive relation and its inverse
  share the projection dimension.}). For example, for triples
$(a,r,b),(b,r,c),\allowbreak(c,r,d),(c,r,e)$ the following chains are
generated: $(a,b,c,d)$ and $(a,b,c,e)$. Then, for each relation $r$,
we find the value
$\mbox{emb}(a) = \va_{\mbox{\scriptsize answer}}[i]$, where $i$ is the
transitive dimension. Ideally, for a chain $(a,b,c,d)$ we expect
$\mbox{emb}(a) > \mbox{emb}(b) > \mbox{emb}(c) > \mbox{emb}(d)$. We
compute the Spearman correlation score between the expected order of
sequences and the order found in the embeddings and we show the scores
in Table \ref{tab:spearman}. We can see that the order of embeddings
is better preserved in the presence of the transitive
loss. Additionally, we show in Figure \ref{fig:chains} the transitive
chains, where for a chain $a_1, \ldots, a_n$ if
$\mbox{emb}(a_i) > \mbox{emb}(a_{i+1})$ (preserved) we plot a green
line, otherwise we plot a red line. We can see that the visualization
is consistent with the reported Spearman scores.

\begin{table}[tbp]
  \centering
  \caption{Spearman correlation score between transitive chains and embeddings  }
  \begin{tabular}{lrr}
    \toprule
    Model & hypernym & has\_part\\
    \midrule
    \# of chains & 23991 & 2878 \\
    \midrule
    \method{}-notr & {0.12} & {0.17} \\
    \method{}-tr & \f{0.87} & \f{0.79} \\

      \bottomrule
  \end{tabular}
  \label{tab:spearman}
\end{table}

\begin{figure}[tbp]
  \Description{Representation of transitive chains in latent space}
    \centering
    \includegraphics[width=0.7\textwidth]{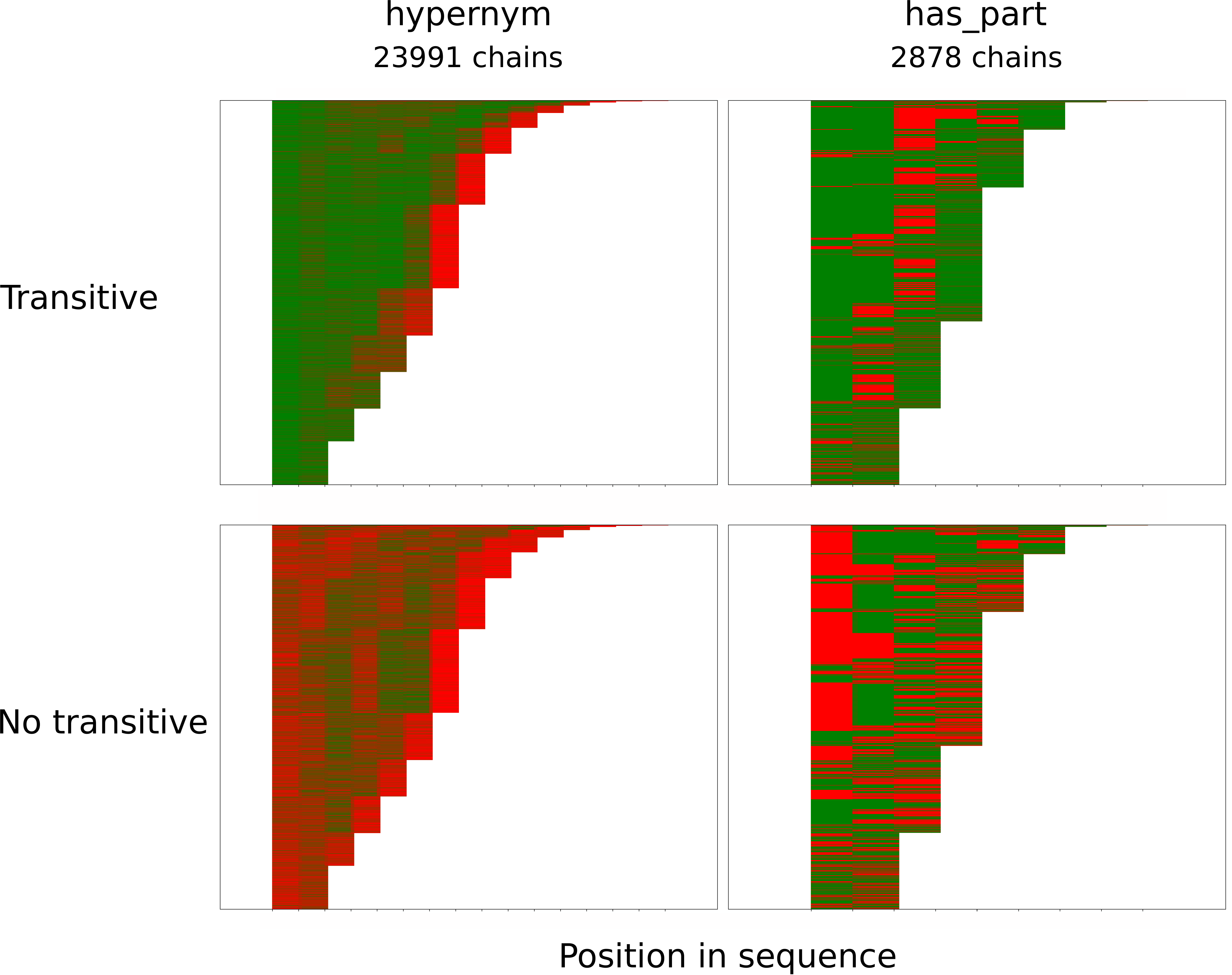}
    \caption{Transitive chains. For a chain $a_1,\ldots,a_n$ we plot a
      green line if the embeddings $emb(a_i)>emb(a_{i+1})$, otherwise
      we plot a red line. Upper plots show the embeddings generated
      using the transitive loss and the lower plots show the
      embeddings generated using the normal loss function.}
    \label{fig:chains}
  \end{figure}

\paragraph{Ablation Study}

We used a box model similar to Query2Box~\cite{query2box}. While
our method relies on purely geometrical operations, we have added a
more complex function to compute relation projection. For every
relation $r$, its embedding is $\vr = (r_1,r_2,r_3,r_4)$, where
$r_1,r_3$ are multiplicative factors and $r_2,r_4$ are additive
terms. In Query2Box, the embedding of the relation is equivalent to
$\vr_{\mbox{\scriptsize Q2B}} = (\vone, r_2, \vone, r_3)$. In
Table~\ref{tab:ablation} we show the performance results of our method
when we fix $\vr_{\mbox{\scriptsize mul}}=(r_1, \vzero, r_3, \vzero)$
(\method{}-m), and when we fix
$\vr_{\mbox{\scriptsize add}} = (\vone, r_2, \vone, r_3)$
(\method{}-a). Additionally, we add the results from Query2Box. We can
see that, as expected, the performance of Query2Box is similar to
\method{}-a in the NELL and FB237 datasets. Furthermore, our results
also show that the combination of additive and multiplicative
components in the relation embedding improve their quality which is
evidenced by the improvement of prediction performance.
Transversely, we also study two ways of initializing the answer
embedding. For a node $a$ in the knowledge graph, we represent it as a
query $\va_{\scriptsize query}=(\cen{a},\off{a})$ and as answer
$\va_{\scriptsize answer} = (\cen{a}', \vzero)$, we investigate if it
is better to create a separate vector for $\cen{a}'$ or we can reuse
some vector (i.e., $\cen{a}' = \cen{a}$). For WN18RR we found that
using different embeddings for answers is more suitable, whereas for
NELL the results are mixed and for FB237 using the center of the query
embeddings as answer embeddings obtains better performance (Table
\ref{tab:answer_embeddings}). This can be caused by the sparsity level
of each graph. For example, FB237 is the densest graph, NELL is less
dense and finally WN18RR is sparse~\cite{lin-etal-2018-multi}. Sparser
graphs require separating answer and entity embeddings while denser
graphs work better with a strong coupling between answer and entity
embeddings.

\begin{table}[tbp]
  \centering
  \caption{Ablation of multiplicative and additive components in the
    relation projection operation.}
\begin{adjustbox}{width=0.99\textwidth}
  \begin{tabular}{llrrrrrrrrrrrrrr}
    \toprule
    Dataset                 & Model            &1p          &2p          &3p          & 2i & 3i & pi & ip          & 2u & up & 2in, &3in & inp & pin & pni\\
    \midrule
    \multirow{4}{*}{WN18RR} & Q2B & 22.4 & 4.6 & 2.3 & 25.6 & 41.2 & 13.2 & 11.0 & 2.9 & 3.4 & 0.0 & 0.0 & 0.0 & 0.0 & 0.0 \\
                            & \method{}-a & 36.6 & 6.7 & 4.7 & 53.2 & 84.4 & 25.8 & 11.9 & 4.7 & 3.6 & 8.6 & 44.0 & 4.4 & 3.5 & 5.6 \\
                            & \method{}-m & \s{51.2} & \s{17.2} & \s{10.3} & \s{64.6} & \s{88.5} & \s{39.1} & \f{26.4} & \s{12.5} & \s{12.1} & \f{16.9} & \f{68.3} & \s{12.3} & \s{7.7} & \s{8.9} \\
                            & \method{}-tr & \f{52.4} & \f{18.6} & \f{11.5} & \f{67.2} & \f{89.0} & \f{39.5} & \s{25.9} & \f{13.2} & \f{13.4} & \s{16.3} & \s{68.0} & \f{13.3} & \f{8.1} & \f{9.4} \\
    \midrule
    \multirow{5}{*}{NELL} & Q2B & 42.7 & 14.5 & 11.7 & 34.7 & 45.8 & 23.2 & \s{17.4} & 12.0 & 10.7 & 0.0 & 0.0 & 0.0 & 0.0 & 0.0 \\
                            & \method{}-a & 42.4 & 14.8 & \s{12.5} & 34.2 & 44.7 & 23.8 & 17.2 & 12.5 & 10.9 & 5.0 & 6.7 & 8.1 & 3.4 & 3.3 \\
                            & \method{}-m & \s{55.1} & \s{15.4} & 10.7 & \s{39.6} & \s{49.5} & \s{26.2} & \s{17.4} & \s{15.9} & \s{11.2} & \f{7.0} & \f{9.6} & \s{9.0} & \s{3.9} & \s{4.1} \\
                            & \method{} & \f{58.2} & \f{17.6} & \f{15.6} & \f{40.8} & \f{51.5} & \f{27.7} & \f{19.6} & \f{16.1} & \f{12.8} & \s{6.2} & \s{8.4} & \f{11.0} & \f{4.0} & \f{4.3} \\
    \midrule
    \multirow{5}{*}{FB237}  & Q2B & \s{41.3} & \s{9.9} & \s{7.2} & \s{31.1} & \s{45.4} & 21.9 & 11.9 & 9.3 & 7.1 & 0.0 & 0.0 & 0.0 & 0.0 & 0.0 \\
                            & \method{}-a & 41.1 & 9.2 & 6.5 & 29.3 & 43.4 & 21.9 & \s{13.1} & 11.6 & 7.2 & \s{5.4} & \s{9.4} & \s{5.9} & \s{3.6} & 2.9 \\
                            & \method{}-m & 40.6 & 9.1 & 5.7 & 30.9 & 45.0 & \s{22.0} & 12.3 & \s{12.8} & \s{7.4} & 5.3 & 9.3 & \s{5.9} & 3.3 & \s{3.0} \\
                            & \method{} & \f{44.2} & \f{11.6} & \f{9.7} & \f{33.8} & \f{48.6} & \f{25.2} & \f{14.8} & \f{14.1} & \f{10.0} & \f{6.0} & \f{11.2} & \f{7.7} & \f{4.6} & \f{3.6} \\

    \bottomrule
  \end{tabular}
  \end{adjustbox}
  \label{tab:ablation}
\end{table}

\begin{table}[htbp]
  \small
  \centering
  \caption{Analysis on the choice of answer embeddings. \method{}-na
    refers to using the center of the query of a node $a$ as the
    embedding of the answer. \method-wa refers to using a separate
    vector to model the answer embedding.}
\begin{adjustbox}{width=0.99\textwidth}
  \begin{tabular}{llrrrrrrrrrrrrrr}
    \toprule
    Dataset& Model&1p&2p&3p&2i&3i&pi&ip&2u&up&2in&3in&inp&pin&pni \\
    \midrule
    \multirow{2}{*}{WN18RR} & \method{}-na & \s{51.4} & \s{17.1} & \s{10.1} & \s{65.8} & \s{88.2} & \s{37.5} & \f{26.9} & \s{10.5} & \s{12.7} & \s{15.2} & \s{66.9} & \s{12.0} & \s{7.5} & \s{8.8} \\
& \method{}-wa & \f{52.3} & \f{18.6} & \f{11.5} & \f{67.2} & \f{89.0} & \f{39.5} & \s{25.9} & \f{13.2} & \f{13.4} & \f{16.3} & \f{68.0} & \f{13.4} & \f{8.1} & \f{9.4} \\
    \midrule
    \multirow{2}{*}{NELL} & \method{}-na & \f{58.2} & \f{17.6} & \f{15.6} & \s{40.8} & \s{51.5} & \f{27.7} & \f{19.6} & \s{16.1} & \f{12.8} & \f{6.2} & \s{8.4} & \f{11.0} & \f{4.0} & \f{4.3} \\
& \method{}-wa & \s{57.4} & \s{16.0} & \s{14.9} & \f{41.1} & \f{51.6} & \s{26.7} & \s{17.2} & \f{16.8} & \s{11.9} & \f{6.2} & \f{8.8} & \s{10.7} & \f{4.0} & \f{4.3} \\
    \midrule
    \multirow{2}{*}{FB237} & \method{}-na & \f{44.2} & \f{11.6} & \f{9.7} & \s{33.8} & \f{48.6} & \f{25.2} & \f{14.8} & \f{14.1} & \f{10.0} & \f{6.0} & \f{11.2} & \f{7.7} & \s{4.6} & \f{3.6} \\
& \method{}-wa & \s{42.6} & \s{11.2} & \s{9.3} & \f{33.9} & \s{48.5} & \s{24.8} & \s{13.2} & \s{13.6} & \s{9.5} & \s{5.7} & \s{11.1} & \s{7.4} & \f{4.8} & \s{3.5} \\
    \bottomrule
  \end{tabular}
\end{adjustbox}
  \label{tab:answer_embeddings}
\end{table}

\section{Discussion}
We introduced \method{}, a box model method for multi-hop reasoning on
knowledge graphs. We demonstrated that complex queries can be
constructed without learning logical operators. Instead, under
appropriate conditions, learning the query embeddings themselves is
sufficient. In our approach, these ``appropriate conditions'' involve
using a combination of additive and multiplicative components for
relation projection, and providing a tailored scoring function for
negation queries.

We also developed a framework to effectively model transitive
relations. While we employed a simple idempotent transformation (the
identity function), we enhanced it with a specialized loss function
that preserves the ordering nature of transitive chains. Our empirical
results confirm that this approach successfully captures the
sequential structure inherent in transitive relationships. A
limitation of GeometrE is that it reserves one dimension per
transitive relation, meaning that simply increasing the embedding size
($n$) might not scale if the number of transitive relations ($m$) is
very large ($m >> n$).  However, under some conditions, dimensions can
be reused. If two transitive relations have disjoint domains and
ranges, they can share the same ordering dimension without
interference. While this approach relies on domain/range knowledge
(the disjointness condition), it relaxes the constraint $m \le n$ and
allows the method to scale to knowledge graphs with more transitive
relations than dimensions.

Furthermore, our method can outperform existing
geometric baselines and approach the performance of purely neural-based
methods, while providing geometric interpretability. A promising
direction for future research would be to investigate different
transformation types and loss functions in alternative geometric
spaces to further improve the representation of relational properties.

\section{Conclusion}
We present \method{}, a method to perform multi-hop reasoning by
embedding queries to boxes and mapping logical operators to purely
geometric operations. We additionally introduced a transitive loss
function that preserves the ordered structure of transitive
relationships. Our experiments show that \method{} outperforms  baselines
on diverse types of knowledge graph queries.

\begin{credits}

\subsubsection{\ackname}
This work has been supported by funding from King Abdullah University
of Science and Technology (KAUST) Office of Sponsored Research (OSR)
under Award No. URF/1/4675-01-01, URF/1/4697-01-01, URF/1/5041-01-01,
REI/1/5659-01-01, and REI/1/5235-01-01, and by funding from King
Abdullah University of Science and Technology (KAUST) -- KAUST Center
of Excellence for Smart Health (KCSH), under award number 5932.  We
acknowledge support from the KAUST Supercomputing Laboratory.

\subsubsection{Supplemental Material Statement:}

To reproduce our results, we make our code available at
\url{https://github.com/bio-ontology-research-group/geometrE}. The WN18RR-QA
dataset was obtained from
\url{https://github.com/nelsonhuangzijian/WN18RR-QA} and both NELL-QA
and FB15k-237 were obtained from
\url{https://github.com/snap-stanford/KGReasoning}

\subsubsection{Generative AI statement}

The authors used Generative AI tools to refine the sentence structure,
format mathematical proofs, and improve the readability of this
manuscript. The authors affirm that the scientific content, original
drafting, and final verification of all materials are their own work
and take full responsibility for the integrity of the publication.

\subsubsection{\discintname}
The authors have no competing interests to declare that are relevant
to the content of this article.
\end{credits}
%
%
%
\bibliographystyle{splncs04}
\bibliography{refs}

\clearpage

\appendix

\section{Complement Intersection}
\label{app:complement_intersection}
\begin{lemma}[Complement Intersection]

Let $B_1$ and $B_2$ be two $n$-dimensional boxes such that
$B_1 \cap B_2 = \emptyset$. Also, we define the complement of $B_1$, denoted
as $\bar{B}_1$, as:
\begin{equation*}
  \bar{B}_1 = \{\vx \in \mathbb{R}^n : \vx \notin B_1\}
\end{equation*}

Then, the intersection of the complement
of $B_1$, denoted as $\bar{B}_1$, with $B_2$ is $B_2$ itself:
\begin{equation*}
\bar{B}_1 \cap B_2 = B_2
\end{equation*}
\end{lemma}

\begin{proof}
  Let $B_1 = (\lowc{B_1}, \upc{B_1})$ and
  $B_2 = (\lowc{B_2}, \upc{B_2})$ be two $n$-dimensional boxes. Since
  $B_1 \cap B_2 = \emptyset$, there exists at least one dimension
  $j \in \{1, 2, \ldots, n\}$ such that either
  $\upc{B_1}[j] < \lowc{B_2}[j]$ or $\upc{B_2}[j] < \lowc{B_1}[j]$.

  This means that for any point $\vp \in \mathbb{R}^n$,
  $\vp \in \bar{B}_1$ if and only if there exists at least one
  dimension $i \in \{1, 2, \ldots, n\}$ such that
  $\vp[i] < \lowc{B_1}[i]$ or $\vp[i] > \upc{B_1}[i]$. For any point
  $\vq \in B_2$, we have $\lowc{B_2}[i] \leq \vq[i] \leq \upc{B_2}[i]$
  for all $i \in \{1, 2, \ldots, n\}$. Since
  $B_1 \cap B_2 = \emptyset$, for any point $\vq \in B_2$,
  $\vq \notin B_1$, which means $\vq \in \bar{B}_1$. Therefore, every
  point in $B_2$ is also in $\bar{B}_1$, implying that
  $B_2 \subseteq \bar{B}_1 \cap B_2$. Conversely, any point in
  $\bar{B}_1 \cap B_2$ must be in $B_2$, so
  $\bar{B}_1 \cap B_2 \subseteq B_2$.

  Combining both directions, we have $\bar{B}_1 \cap B_2 = B_2$.
\end{proof}

\section{Datasets}
\label{app:datasets}

In Table~\ref{tab:datasets} we show the number of queries per query
type in each dataset across training, validation and testing subsets.
\begin{table}[htbp]
\centering
\caption{Query statistics for WN18RR-QA}
\label{tab:datasets}
\begin{adjustbox}{width=0.9\textwidth}
\begin{tabular}{lrrrrrrrrrrrrrrr}
\toprule
  Split   &     1p &     2p &    3p &     2i &   3i &     pi &    ip &   2u &   up &   2in &   3in &   inp &   pin &   pni &   Total \\
  \midrule
  \midrule
          &   \multicolumn{14}{c}{WN18RR} \\
  \cmidrule{2-16}
Train   & 103509 & 103509 & 103509 & 103509 & 103509 &    0 &    0 &        0 &        0 & 10350 & 10350 & 10350 & 10350 & 10350 &  569295 \\
 Valid   &   5202 &   1000 &   1000 &   1000 &   1000 & 1000 & 1000 &     1000 &     1000 &  1000 &  1000 &  1000 &  1000 &  1000 &   18202 \\
 Test    &   5356 &   1000 &   1000 &   1000 &   1000 & 1000 & 1000 &     1000 &     1000 &  1000 &  1000 &  1000 &  1000 &  1000 &   18356 \\
  \midrule
  \midrule
          &   \multicolumn{14}{c}{NELL} \\
  \cmidrule{2-16}
  Train   & 107982 & 107982 & 107982 & 107982 & 107982 &    0 &    0 &        0 &        0 & 10798 & 10798 & 10798 & 10798 & 10798 &  593900 \\
 Valid   &  16910 &   4000 &   4000 &   4000 &   4000 & 4000 & 4000 &     4000 &     4000 &  4000 &  4000 &  4000 &  4000 &  4000 &   68910 \\
 Test    &  17021 &   4000 &   4000 &   4000 &   4000 & 4000 & 4000 &     4000 &     4000 &  4000 &  4000 &  4000 &  4000 &  4000 &   69021 \\
  \midrule
  \midrule
          &   \multicolumn{14}{c}{FB237} \\
  \cmidrule{2-16}
  Train   & 149689 & 149689 & 149689 & 149689 & 149689 &     0 &     0 &        0 &        0 & 14968 & 14968 & 14968 & 14968 & 14968 &  823285 \\
 Valid   &  20094 &   5000 &   5000 &   5000 &   5000 &  5000 &  5000 &     5000 &     5000 &  5000 &  5000 &  5000 &  5000 &  5000 &   85094 \\
 Test    &  22804 &   5000 &   5000 &   5000 &   5000 &  5000 &  5000 &     5000 &     5000 &  5000 &  5000 &  5000 &  5000 &  5000 &   87804 \\
   \bottomrule
\end{tabular}
\end{adjustbox}
\end{table}

\section{Hyperparameters selection}
\label{app:hyperparameters}

To train \method{}, we performed grid search over the following
hyperparameters: (a)$\alpha$: $[0, 0.1, 0.2, 0.5]$, (b) $\gamma$:
$[10, 20, 40]$, (c) embedding size: $[100, 200, 400]$, (d)
learning rate: $[0.001, 0.0005, 0.0001]$, (e) answer embedding:
[``yes'',``no'']. For the transitive loss function, we used
$\lambda=0.1$.

\begin{table}[h]
\centering
\caption{Hyperparameters for Knowledge Graph Datasets}
\begin{tabular}{lccc}
\toprule
\textbf{Hyperparameter} & \textbf{WN18RR} & \textbf{NELL} & \textbf{FB15k-237} \\
\midrule
$\alpha$ & 0.5 & 0.2 & 0.2 \\
$\gamma$ & 20 & 10 & 20 \\
embedding size & 400 & 400 & 400 \\
learning rate & 0.001 & 0.0005 & 0.0005 \\
with answer embedding & yes & no & no \\
\bottomrule
\end{tabular}
\end{table}

\section{Statistical significance test}
\label{app:stats}
In this section we show the experiments run over 7 different random
seeds. Table ~\ref{tab:stats} shows the results for positive queries
and Table ~\ref{tab:stats_neg} shows the results for negative queries.
\begin{table}[htbp]
  \small
  \centering
  \caption{Mean and standard deviation of experiments over 7 different
  random seeds on positive queries.}
\begin{adjustbox}{width=0.9\textwidth}
  \begin{tabular}{llrrrrrrrrr}
    \toprule
    Dataset&Model&1p&2p&3p&2i&3i&pi&ip&2u&up\\
        \midrule
    \multirow{2}{*}{WN18RR}& \method{}-notr & 52.2\std{0.1} & 18.5\std{0.1} & 11.3\std{0.2} & 68.1\std{0.3} & 88.9\std{0.3} & 40.1\std{0.3} & 25.8\std{0.2} & 12.7\std{0.2} & 13.1\std{0.2}  \\
           & \method{}-tr & 52.3\std{0.1} & 18.6\std{0.1} & 11.5\std{0.1} & 67.2\std{0.3} & 89.0\std{0.3} & 39.5\std{0.3} & 25.9\std{0.2} & 13.2\std{0.3} & 13.4\std{0.2}  \\
    NELL & \method{} & 58.2\std{0.1} & 17.6\std{0.1} & 15.6\std{0.1} & 40.8\std{0.2} & 51.5\std{0.2} & 27.7\std{0.1} & 19.5\std{0.1} & 16.1\std{0.1} & 12.8\std{0.1} \\
           FB237 &  \method{} & 44.2\std{0.1} & 11.6\std{0.1} & 9.7\std{0.1} & 33.7\std{0.2} & 48.6\std{0.2} & 25.2\std{0.2} & 14.8\std{0.1} & 14.1\std{0.1} & 10.0\std{0.1}  \\
    
    \bottomrule
  \end{tabular}
\end{adjustbox}
\label{tab:stats}
\end{table}
\begin{table}[b]
  \small
  \centering
  \caption{Mean and standard deviation of experiments over 7 different
  random seeds on negation queries.}
\begin{adjustbox}{width=0.6\textwidth}
  \begin{tabular}{llrrrrr}
    \toprule
    Dataset&Model&2in&3in&inp&pin&pni\\
    \midrule
    \multirow{2}{*}{WN18RR}&\method{}-notr & 16.4\std{0.2} & 68.1\std{0.8} & 13.2\std{0.2} & 7.8\std{0.2} & 9.4\std{0.1}\\
           &\method{}-tr & 16.3\std{0.2} & 68.0\std{0.5} & 13.4\std{0.1} & 8.1\std{0.1} & 9.4\std{0.1}\\
\midrule
    {NELL}&\method{} & 6.2\std{0.1} & 8.5\std{0.1} & 11.0\std{0.1} & 4.0\std{0.1} & 4.3\std{0.1}  \\
\midrule
    {FB237}&\method{} & 6.0\std{0.1} & 11.2\std{0.1} & 7.7\std{0.1} & 4.6\std{0.1} & 3.6\std{0.1} \\

\bottomrule
  \end{tabular}
\end{adjustbox}
\label{tab:stats_neg}
\end{table}

\end{document}